\newtheorem{theorem}{Theorem}
\begin{document}
	\title{Generating Large Convex Polytopes Directly on Point Clouds}\label{key}
	\author{Xingguang Zhong, Yuwei Wu, Dong Wang, Qianhao Wang, Chao Xu, and Fei Gao
\thanks{X. Zhong, D. Wang, Q. Wang, C. Xu, and F. Gao are with the State Key Laboratory of Industrial Control Technology, Institute of Cyber-Systems and Control, Zhejiang University, Hangzhou 310027, China. (e-mail: {\tt\small zhongstarry@gmail.com, and \{dongwangab, qhwang, cxu, fgaoaa\}@zju.edu.cn }  }
\thanks{Y. Wu is with the Department of Electrical and Systems Engineering, University of Pennsylvania, Philadelphia, PA 19104 USA (e-mail: {\tt\small yuweiwu@seas.upenn.edu} }
}
	\maketitle
	\thispagestyle{empty}
	\pagestyle{empty}
	
	\begin{abstract}
	In this paper, we present a method to efficiently generate large, free, and guaranteed convex space among arbitrarily cluttered obstacles.
Our method operates directly on point clouds, avoids expensive calculations, and processes thousands of points within a few milliseconds, which extremely suits embedded platforms.
The base stone of our method is sphere flipping, a one-one invertible nonlinear transformation, which maps a set of unordered points to a nonlinear space.
With these wrapped points, we obtain a collision-free star convex polytope.
Then, utilizing the star convexity, we efficiently modify the polytope to convex and guarantee its free of obstacles.
Extensive quantitative evaluations show that our method significantly outperforms state-of-the-art works in terms of efficiency.
We also present practical applications with our method in 3D, including large-scale deformable topological mapping and quadrotor optimal trajectory planning, to validate its capability and efficiency.
The source code of our method will be released for the reference of the community.

	\end{abstract}
	
	\IEEEpeerreviewmaketitle
	
	\section{Introduction}
	\label{sec:introduction}
Autonomous robots are emergent in recent years, spreading from aerial delivery to autonomous driving.
In these applications, finding free convex regions among dense obstacles is of vital importance, as they depict navigable space for routing and collision avoidance.

In this paper, we propose a method that efficiently generates large, free and guaranteed convex polytopes directly on point clouds.
Unlike traditional methods that rely on iterative inflation~\cite{gao2020teach} on a discretized map, range query on sophisticated binary space partitioning tree~\cite{fei2018jfr,CheLiuShe2016}, and iterative enumeration and optimization~\cite{deits2015computing}, our method highlights its generalizability, effectiveness and efficiency in three-folds.
Firstly, it guarantees the generated volumes are convex and free, making it seamlessly applicable to mainstream constrained safe trajectory generator.
Secondly, it operates directly and solely on point clouds.
Therefore, it saves much computational overhead from maintaining a post-processed map and retains the most fidelity of raw measurements.
Finally, it makes no assumptions on convexity and structure of environments, thus suits noisy real-world sensors and can be plug-in-and-use in versatile applications.
	
	Our method can be divided into two steps, the modeling of free regions among obstacles and the construction of convex polytopes.
Given an unordered point cloud, we firstly map all points into a nonlinear space by a transformation named sphere flipping~\cite{katz2005mesh}, to flip interior points out.
After this transformation, we calculate the convex hull of these wrapped points and inversely map vertices of the hull back to the original cartesian space.
Using these vertices, we can obtain an obstacle-free polytope, which is proved to be star convex (see Sec.~\ref{sec:methodology}) and is regarded as a representation of free space.
Then, utilizing the special properties of star-convexity, we easily modify the polytope to be strict convex and ensure that it contains no obstacle point as well.

\begin{figure}[t]    
	\centering
	{\includegraphics[width=0.99\columnwidth]{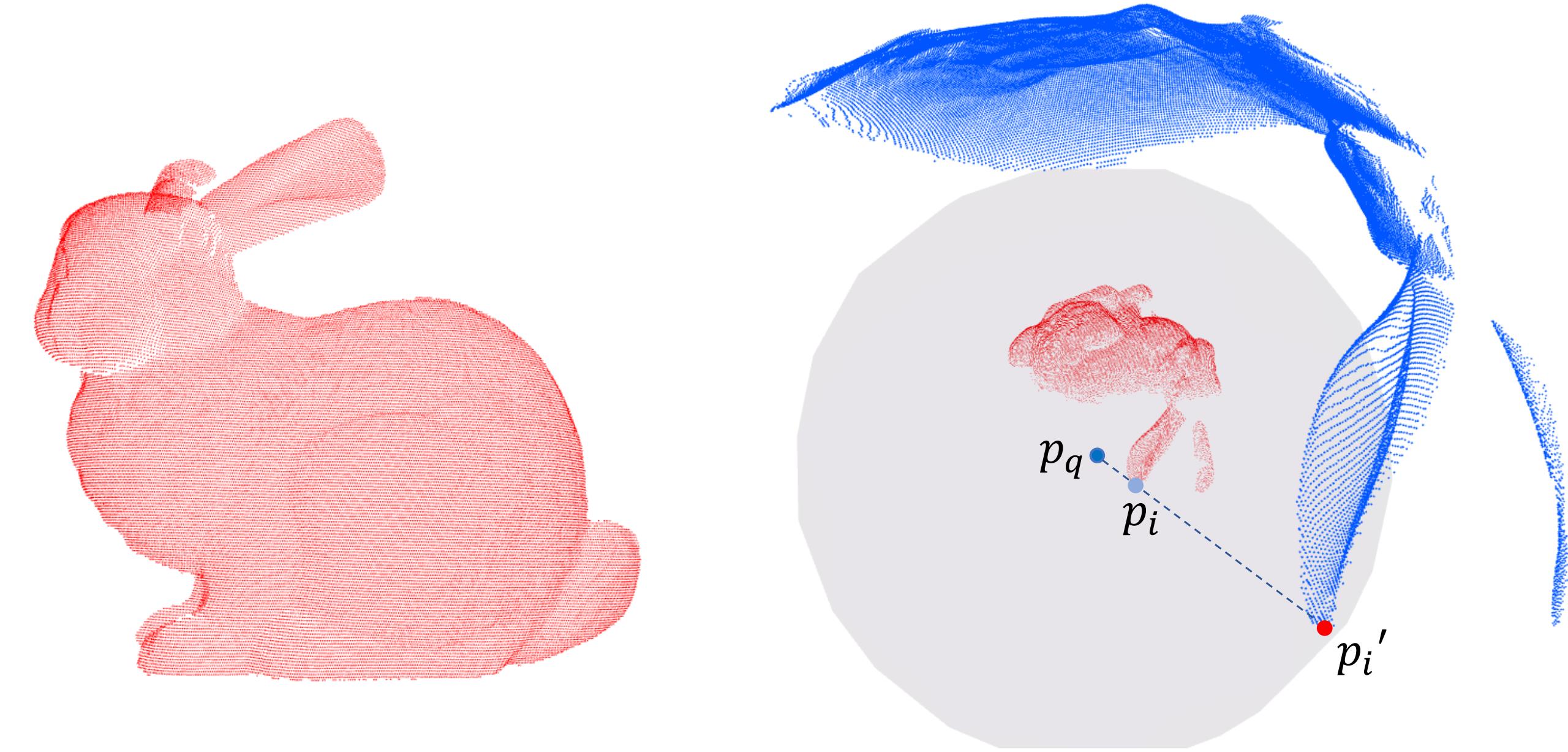}}
	\caption{\label{fig:rabbit} Original 3D point cloud (red) and point cloud after sphere flipping (blue) are shown. $p_q$ named query coordinate is the center of this sphere (grey), $p_i$ and $p_i'$ are a pair of points before and after the mapping }
\end{figure}

Benefiting from the sphere flipping transformation, our method converts the problem of finding an inscribed convex polytope to finding a convex hull, which is well-studied.
In this paper, our method avoids expensive operations and calculations and can take only milliseconds for a point cloud with thousands of points.
Compared with state-of-the-art works, the proposed method finds 3D convex free polytopes with comparable quality and much lower computational overhead and is easily extendable to higher dimensions.
Our method is validated with extensive benchmark tests and applications using synthetic and real-world data.
We summarize our contributions as:
\begin{itemize}
\item A general algorithm to obtain collision-free star convex polytopes given unordered point clouds, by utilizing the sphere flipping transformation.
\item A fast method to trim the star convex polytopes to convex ones while preserving them collision-free.
\item Versatile benchmark tests and demonstrations to show the capability of our method in robotic applications, such as large-scale sparse topological mapping and quadrotor safe trajectory generation.
\end{itemize}

In what follows, we discuss related literature in Sec.~\ref{sec:related_works}.
The proposed method for generating convex free polytopes in point clouds is detailed in Sec.~\ref{sec:methodology}.
Quantitative evaluations showing our method outperform previous works are presented in Sec.~\ref{sec:results}.
Applications utilizing our method for robotic navigation are demonstrated in Sec.~\ref{sec:applications}.
The paper is concluded in Sec.~\ref{sec:conclusion}.

	\section{Related Works}
	\label{sec:related_works}
		
	Obtaining obstacle-free regions is one of the core issues in the field of mobile robots.
Discretizing the configuration space with a grid map or octomap~\cite{Hornung2013OctoMap} to find the obstacle-free area is a natural idea.
Chen et al.~\cite{7487283} propose to generate online a flight corridor consisting of large overlapping 3-D cubes with an octree-based environment representation.
But the corridor it generates is simply axis-aligned, which spans only local space and requires expensive maintenance of the map.
Blchliger et al.~\cite{2017Topomap} present a topological Mapping method named Topomap, which extracts occupancy information from the noisy, sparse point cloud by growing and merging voxel clusters.
It creates a global topological map with a set of convex clusters for robot navigation.
However, building these convex clusters is time-consuming, which prevents it from being used online.
In~\cite{gao2020teach}, the authors propose a method for generating convex polytope from the convex cluster of voxels, and the cluster is obtained by iterative inflation based on a spatial discretized map.
This algorithm requires sophisticated implementation and GPU acceleration to achieve real-time performance, and its time-complexity increases rapidly as the scale of the map.
Some methods~\cite{2010Incremental,2017Building} are proposed to discretize space via the 3D Delaunay triangulation.
Method in~\cite{2010Incremental} carves the volume that violates free-space or visibility constraint.
Similarily, ~\cite{2017Building} also uses Delaunay triangulation, and gives some engineering considerations to make it run in real-time, and finally obtains free space among the point cloud generated by the visual SLAM.
However, the free space obtained by these methods has no guarantee of convexity; thus, it cannot be directly applied to common planning systems.
	
	Some other methods are proposed to obtain convex obstacle-free regions based on the hypothesis of a convex environment.
These algorithms can directly work on raw data obtained by sensors for mobile robots, thereby saving the cost of building a post-processed map.
The representative method IRIS~\cite{deits2015computing} is an iterative algorithm based on the assumption that all obstacles are convex.
While dealing with a point cloud, since every single point is convex, IRIS is applicable if every point is treated as an isolated obstacle. 
Given a starting coordinate, it uses semi-definite programming (SDP) and quadratic programming (QP) to expand the free region iteratively.
Compared to our method in this paper, it costs much more computational resources for generating results with comparable quality.
Liu et al.~\cite{liu2017ral} also propose a method for generating a safe flight corridor (SFC) composed of connected convex polytopes from point cloud.
This method adds linear constraints to the free space by querying the closest point of a given obstacle-free ellipsoid, which is fundamentally similar to a non-iterative version of IRIS.
Besides, this method requires a leading path, which may not be available except for maintaining a grid map.
In~\cite{2017An}, an efficient method for generating an obstacle-free convex polytope directly from the point cloud is proposed.
This work is rather inspiring, for it uses stereographic projection to wrap original points to a nonlinear space, which significantly simplifies the original problem of finding the free space.
However, free space obtained by this method is easily constrained by points near the query coordinate, resulting in polytopes with limited volume.
	
	\section{Convex Polytope Generation}
		\label{sec:methodology}
		Given a query coordinate $p_q \in \mathbb{R}^n$ and the unordered point cloud $ P = \left\{ p_i \in \mathbb{R}^n |1 \leq i \leq N \right\} $, which is considered as the sampling from obstacle surface. Our purpose is to generate a large convex polytope from point $p_q$ and ensure that the polytope does not contain any points in $P$.
	We propose a simple but efficient algorithm to solve this problem, and the method consists of two key steps:
	\begin{enumerate}
	\item Generate a points-free region from a set of unordered points and represent it by a star convex polytope.
	\item Modify the star convex polytope to be strictly convex.
	\end{enumerate}

    \subsection{Generate point-free region}
	Since the points in $P$ have no volume, it is difficult for us to judge whether a certain area is occupied. Traditionally, we often use ray casting or some surface reconstruction methods on point cloud to describe the environment.
	But for unordered point clouds, these algorithms require a lot of computational and storage resources.
	To make our algorithm as fast and lightweight as possible, we avoid using the nearest neighbor search, normal vector calculation, and other complex operations.
	Details of the proposed algorithm are as follows:
	
	We first move the origin to the query coordinate, and then map each $p_i$ in $P$ to $p_i'$ by the nonlinear transformation named sphere flipping below: 
	\begin{equation}
		\label{eq:flipmap}
		p_i' = f(p i) = p_i \cdot (2R - ||p_i||)/||p_i||\\
	\end{equation}
	From (\ref{eq:flipmap}), it is easy to find :
	\begin{equation}
		||p_i'|| = 2R - ||p_i||
	\end{equation}
	where $||\cdot||$ is the 2-norm, $R$ is a user defined parameter which stands for the radius of the sphere. As illustrated in Fig.~\ref{fig:rabbit}, sphere flipping maps every point $p_i$ internal to this sphere along the ray from $p_q$ to $p_i$ to $p_i'$ which is outside of the sphere. 
	
	Obviously, $||f(\cdot)||$ is a decreasing function of norm, which means that the closer a point to the query coordinate is, the farther it will be mapped. Therefore, we can use this mapping to achieve the flip of the internal and external of $P$. The transformed point cloud $P'$ is denoted as: $P' = \left\{ p_i' = f (p i )|p i \in P \right\}$, and the formula for inverse mapping is as follows:
	\begin{equation}
		\label{eq:inversemap}
		p_i = f^{-1}(p_i') = p_i' \cdot ||p_i'||/(2R-||p_i'||)
	\end{equation}
	
	The sphere flipping transformation is critical to our method. 
	For the ease of understanding, we demonstrate it in a 2d case. 
	As shown in Fig.~\ref{fig:sphereflipping}, given two point $P_a$ and $P_b$, we can obtain $P_a' = f(P_a)$ and $P_b' = f(P_b)$ by sphere flipping. By projecting all points on the segment $P_a'P_b'$ back to the original space according to (\ref{eq:inversemap}), a curve $\pi$ between the ray $OP_a$ and $OP_b$ that bulges outward to $P_a'P_b'$ will be generated.  $\pi$'s shape is affected by the parameter $R$, which is derived in detail in ~\cite{katz2007direct}, where this mapping is used to find the visible points directly from unordered point clouds. Suppose that there is a point $P_1'$ in the region $S_l$, the intersection point of ray form $O$ to $P_1'$ and segment $P_a'P_b'$ is $P_L'$, thus the $P_L = f^{(-1)}(P_L')$ is a point on the $\pi$. Due to the decreasing property of sphere flipping, $||P_1|| = ||f^{-1}(P_1')||$ must be larger than $||P_L||$, means that $P_1$ will fall outside the region $S_T$. Similarly, points in $S_r$ will be mapped back into this $S_T$. Thus, we can infer that for the area between rays $oP_a'$ and $oP_b'$, if no image points inside $S_r$, the region $S_T$ is point-free. Furthermore, the triangle $OP_aP_b$ is also free. 
	
	\begin{figure}[t]
		\centering
		\includegraphics[width=0.85\columnwidth]{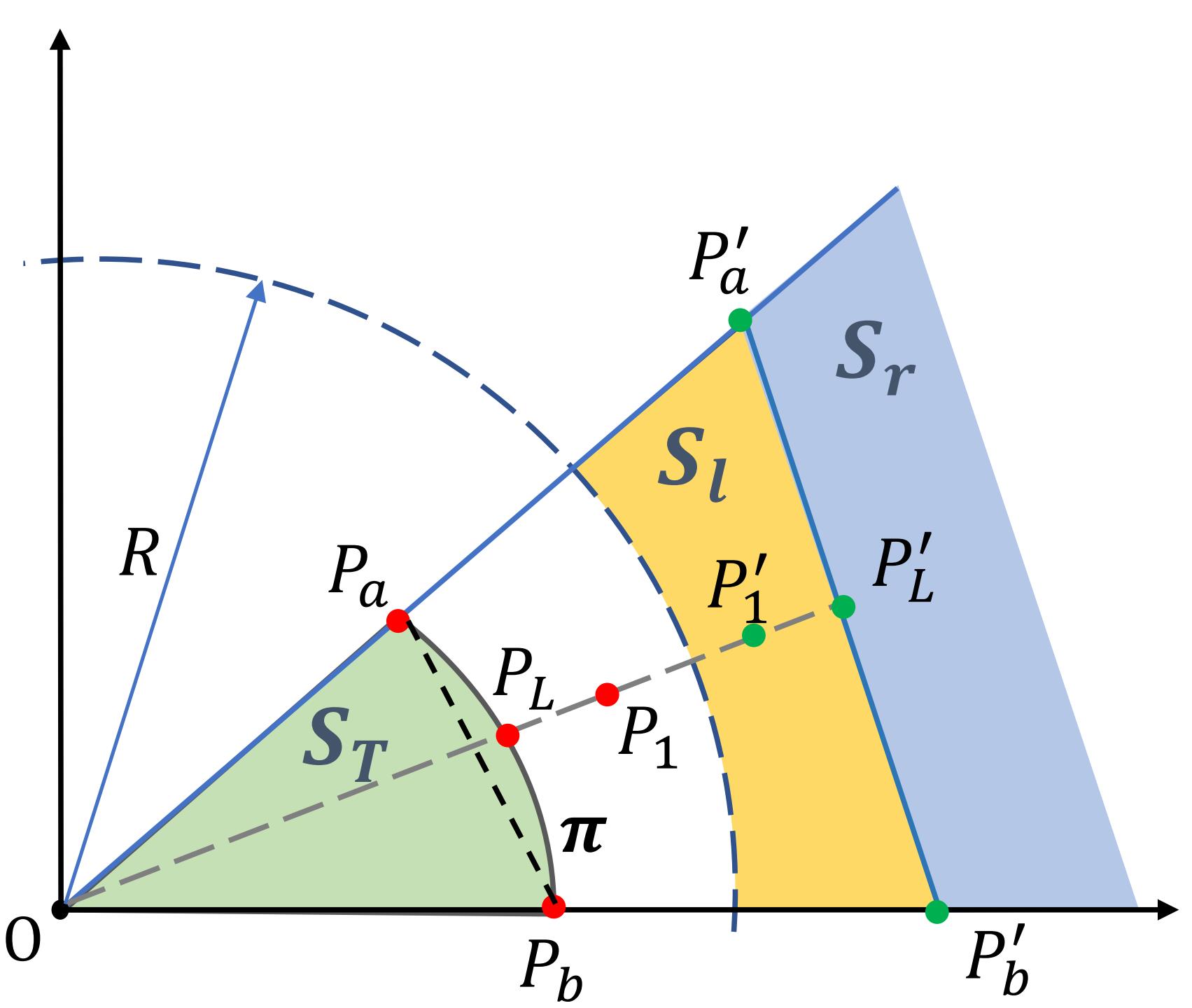}
		\caption{The radius of the dotted circle is the parameter $R$ of sphere mapping, Points in red are in the original space, and the points after mapping are green. The region enclosed by the segment $OP_a$, $OP_b$ and curve $\pi$ is denoted as $S_T$, $S_l$ and $S_r$ represent the left and right side of $P_a'P_b'$.\label{fig:sphereflipping}}
\vspace{-0.5cm}
	\end{figure}
	
	As shown in Fig.~\ref{fig:starconvex}, given the query coordinate $O$ and original point cloud, we assume that there is no hyperplane passing through $O$ can let all other points on the same side of it, which means that point $O$ is wrapped by the point cloud, which is also the assumption of our entire method. Then, We first find the convex hull of points after flipping by the famous quick hull algorithm~\cite{Barber1993The}. $P_a'$ and $P_b'$ are two adjacent vertices of the hull, $P_a$, $P_b$ are two points corresponding to $P_a'$ and $P_b'$ in the original space. According to the property mentioned above, since there is no point on the right side of the line $P_a'P_b'$, the triangle $OP_aP_b$ is point-free. 
    The point-free region $S$ is finally generated by combining the triangles corresponding to each extreme edge.
	
	\begin{theorem}
		\label{thm:star-convex} 
		The region $S$ is a star convex polytope. 
	\end{theorem}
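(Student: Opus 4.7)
The plan is to unpack the definition of star convexity with respect to the query point $O$ (namely, that $O$ lies in the set and for every $q \in S$ the segment $\overline{Oq}$ is contained in $S$) and then exploit the decomposition of $S$ as a union of simplices that all share $O$ as a vertex. Concretely, by the construction just described, $S = \bigcup_k T_k$ where each $T_k$ is the closed simplex whose vertices are $O$ together with the preimages under $f$ of the vertices of one facet of the convex hull of $P'$. I would first verify that this is genuinely a decomposition into finitely many flat simplices (so that $S$ is a polytope), and then argue star convexity as a direct consequence of the shared vertex.

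For the polytope/finiteness step I would first check that $O$ lies in the interior of the convex hull $\mathrm{conv}(P')$. The hypothesis that no hyperplane through $O$ leaves all of $P$ on one side transfers to $P'$ because the flipping $f$ in \eqref{eq:flipmap} is a positive radial rescaling: it preserves the ray from $O$ through every point, hence preserves which side of any hyperplane through $O$ a point lies on. Thus $O$ is strictly interior to $\mathrm{conv}(P')$, and the boundary of this hull decomposes into finitely many facets $F_k$ with vertices $P'_{a_1^k},\ldots,P'_{a_n^k}$. Mapping the vertices back by $f^{-1}$ and joining them with $O$ yields the simplices $T_k$, which meet only along lower-dimensional shared faces and collectively have flat boundary — so $S$ is a bona fide polytope.

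Star convexity is then essentially free. Each $T_k$ is convex and contains $O$ as a vertex; hence for any $q \in T_k$, the segment $\overline{Oq}$ lies entirely in $T_k \subseteq S$. For an arbitrary $q \in S$, we simply pick any $k$ with $q \in T_k$ and apply this observation. This uses only the elementary fact that a finite (or arbitrary) union of sets star convex with respect to a common point is star convex with respect to that point.

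The only step I expect to require real care is the interiority claim, i.e., that $O \in \mathrm{int}\,\mathrm{conv}(P')$ — everything else is bookkeeping. The subtlety is that $f$ is nonlinear, so one must be careful to invoke only its direction-preserving (radial) character and not any linear property; once that is pinned down, the separating-hyperplane argument goes through and the rest of the proof is immediate. I would also briefly remark that the triangles $T_k$ lie \emph{inside} the region bounded by $\overline{OP_a}$, $\overline{OP_b}$ and the back-mapped curve $\pi$ from the preceding discussion, which is why the point-free property established there transfers to each $T_k$ and hence to all of $S$ (though strict point-freeness is not part of the statement of Theorem~\ref{thm:star-convex} itself).
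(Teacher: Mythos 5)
Your proof is correct and follows essentially the same route as the paper's: $S$ is a union of simplices that all share $O$ as a vertex, so every segment $\overline{Oq}$ lies in whichever simplex contains $q$, and the flat facets make $S$ a polytope. You are in fact more careful than the paper --- in particular, your observation that the radial (direction-preserving) character of $f$ transfers the ``$O$ is wrapped by the point cloud'' assumption to $P'$, so that $O$ is interior to $\mathrm{conv}(P')$ and the simplices genuinely tile a neighborhood of $O$, is a detail the paper's two-line proof leaves implicit.
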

	\begin{proof}
		The region $S$ is star convex if there exists a point in $S$ such that the line segment from $S$ to any point in $S$ is contained in $S$.
		
		Since $O$ is the vertex of all triangles, segment from any point in the area $S$ to $O$ must be contained in $S$.
		Besides, all constraints of $S$ are linear hyperplanes, which means $S$ is a star convex polytope.
	\end{proof} 

     	
	The conclusions obtained can be extended to three and even higher-dimensional spaces easily, and the star convex polytope generated by the above method in 3D environment is shown in Fig.~\ref{fig:yebo}. The only parameter for this method is $R$. Within a specific range, the larger the $R$'s value, the more vertices the star convex polytope will have, and more vertices means higher representation ability.

	\begin{figure}[t]
		\centering
		\includegraphics[width=0.8 \columnwidth]{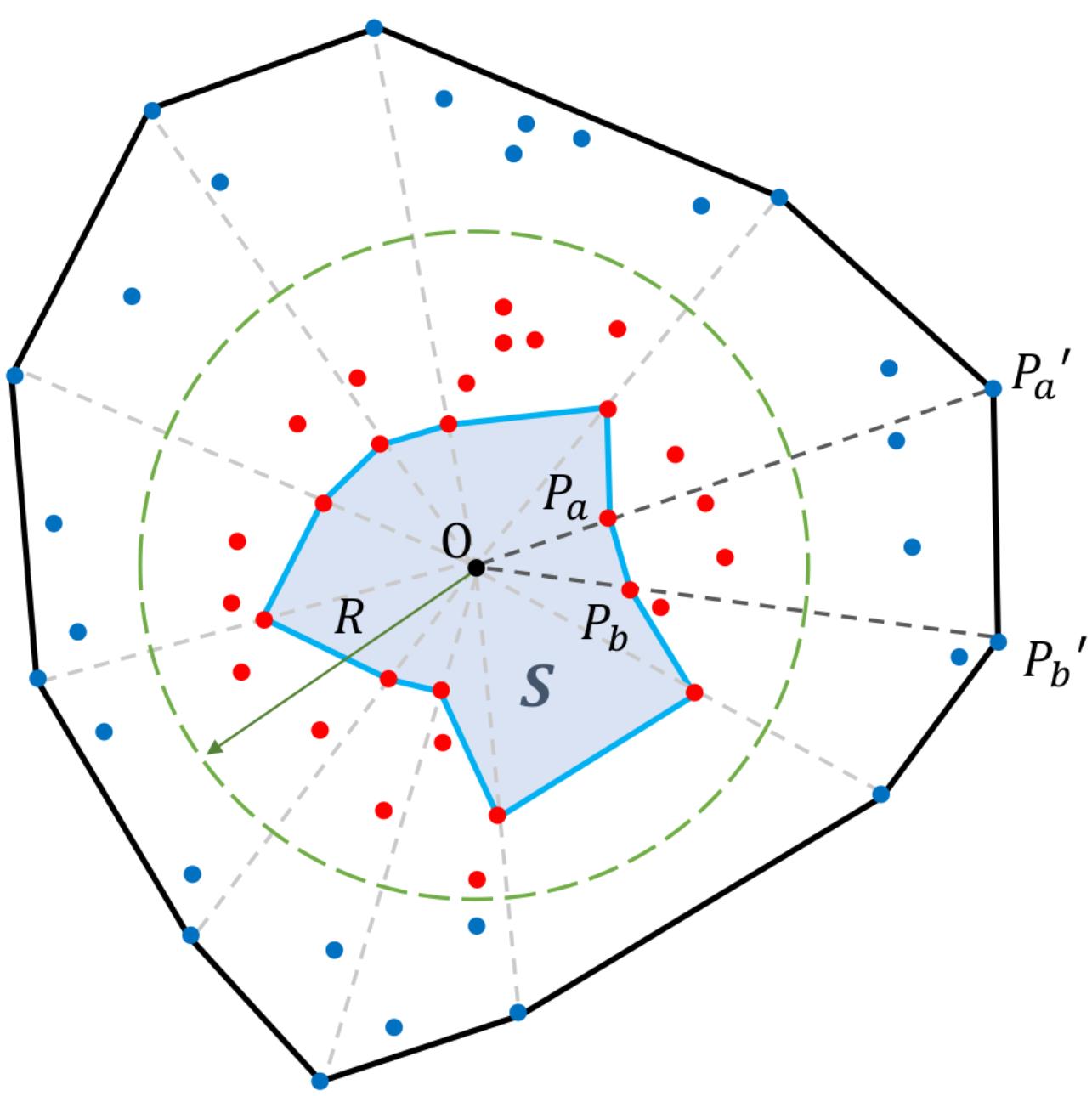}
		\caption{The red points inside the dashed circle are projected to the blue points by one to one correspondence, and point $O$ is the origin and the center of this circle. By finding the convex hull of the projected point cloud, we obtain a Point-free star region $S$..\label{fig:starconvex}}
	\end{figure}

	\subsection{modification to convex}
	To meet the requirements of general planning algorithm for robotic navigation, we hope to get a sizeable convex polytope from the star convex polytope generated by the previous step. 
	We propose a method that can efficiently modify the star convex polytope generated by the previous step to convex and ensure the convex polytope is still point-free. See the following for details.

	The convex polytope can be expressed by the form: $Ax \leq b$, where $A$ is an $m \times n$ matrix, $b$ is an $n \times 1$ vector and $x \in \mathbb{R}^n$. Each row of this inequality represents a half space: $a_{i1}x_1 + a_{i2}x_2 + \cdot \cdot \cdot + a_{in}x_n \leq b_i$.
	
	We first construct the convex hull $H$ from the vertexes of star convex polytope, and the hull we get may contain some vertices and obstacle points. Thanks to the property of the star convex polytope, each extreme edge of $H$ can form a simplex (a triangle in 2D situation) with origin, as shown in Fig~\ref{fig:pushhull3}.  We only need to select all the star convex vertices inside the simplex and push the extreme edge to the position of the vertex furthest from it, and the new simplex we obtain will be point free.
	
	Redundant constraints can be removed by the double description method~\cite{Fukuda1996Double}.
	For convenience, take the 2D case as an example, the entire process of the modifying algorithm is shown in the Fig~\ref{fig:pushhull}.
	The main process of our algorithm is shown in Alg.~\ref{alg:modificationtoconvex}
	
	\begin{figure}[t]
		\centering
		\includegraphics[width=1.0 \columnwidth]{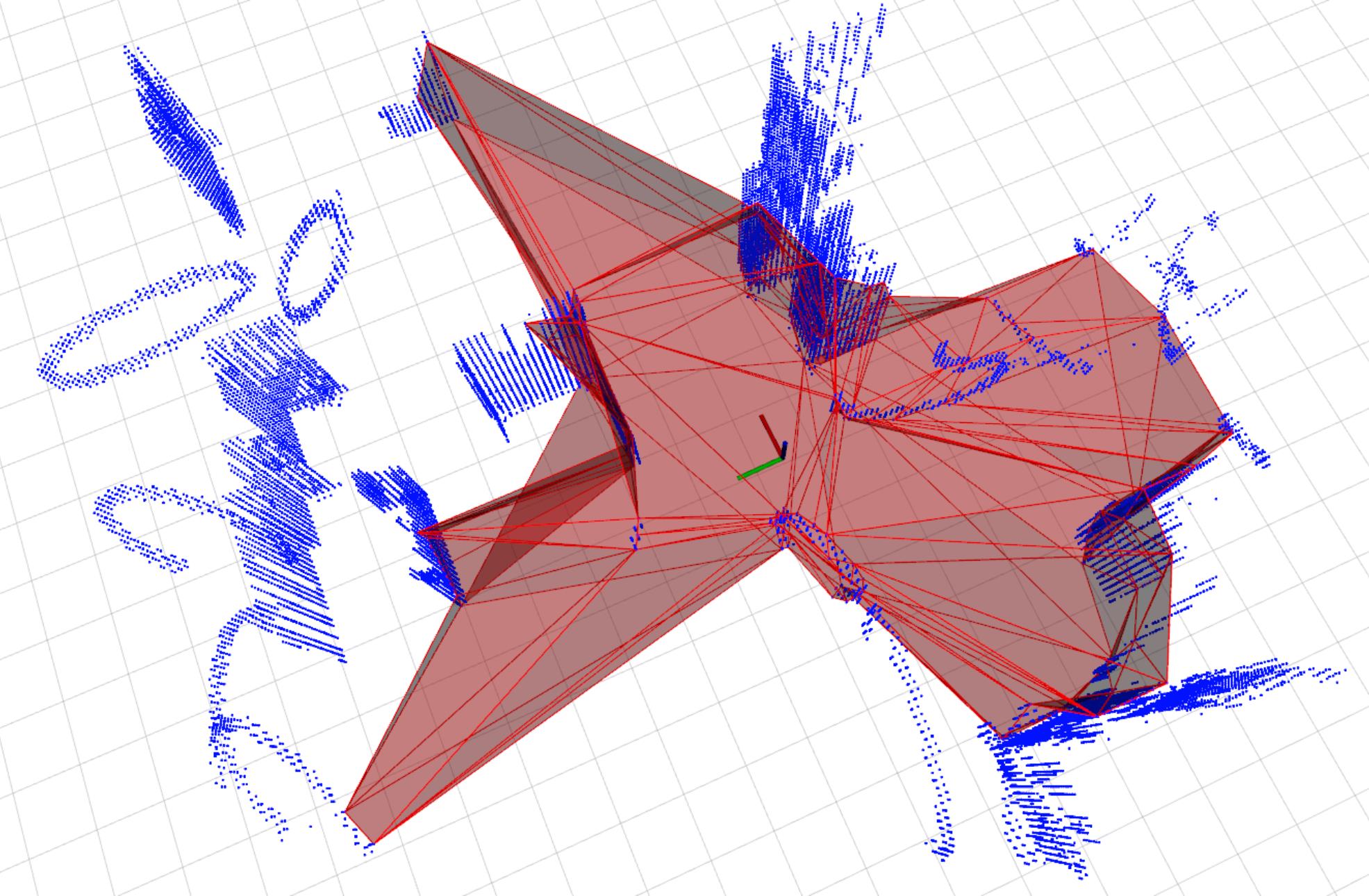}
		\caption{Star convex polytope (red) in cluttered 3D environment.\label{fig:yebo}}
	\end{figure}
    
    \begin{algorithm}
		\caption{modification to convex}
		\label{alg:modificationtoconvex}
		\KwIn{Vertices $P_v$ of star convex region $S$, The query coordinate $p_q$}
		\KwOut{Convex region in original space, represent as $Ax \leq b$ }
		\Begin
		{
			Construct the convex hull of $P_e$, get the extreme egde set $E$;\\
			\For{$e_i$ in $E$}
			{
				1.Calculate the normal vector $n_i$ of $e_i$;\\
				2.define triangle $T_i$ composed by $e_i$ and $p_q$;\\
				3.Select the point $p_i$ which is furthest from the $e_i$ among all the points of $P_e$ in the triangle $T_i$\;
			}
			\For{$e_i$ in $E$}
			{
				$A(i,:) = n_i^T$\\
				$b_i = n_i^T \cdot p_i$\\
			}
		}
	\end{algorithm}

    \begin{figure}[t]
    	\begin{center}
    		\subfigure[\label{fig:pushhull1}]
    		{\includegraphics[width=0.4\columnwidth]{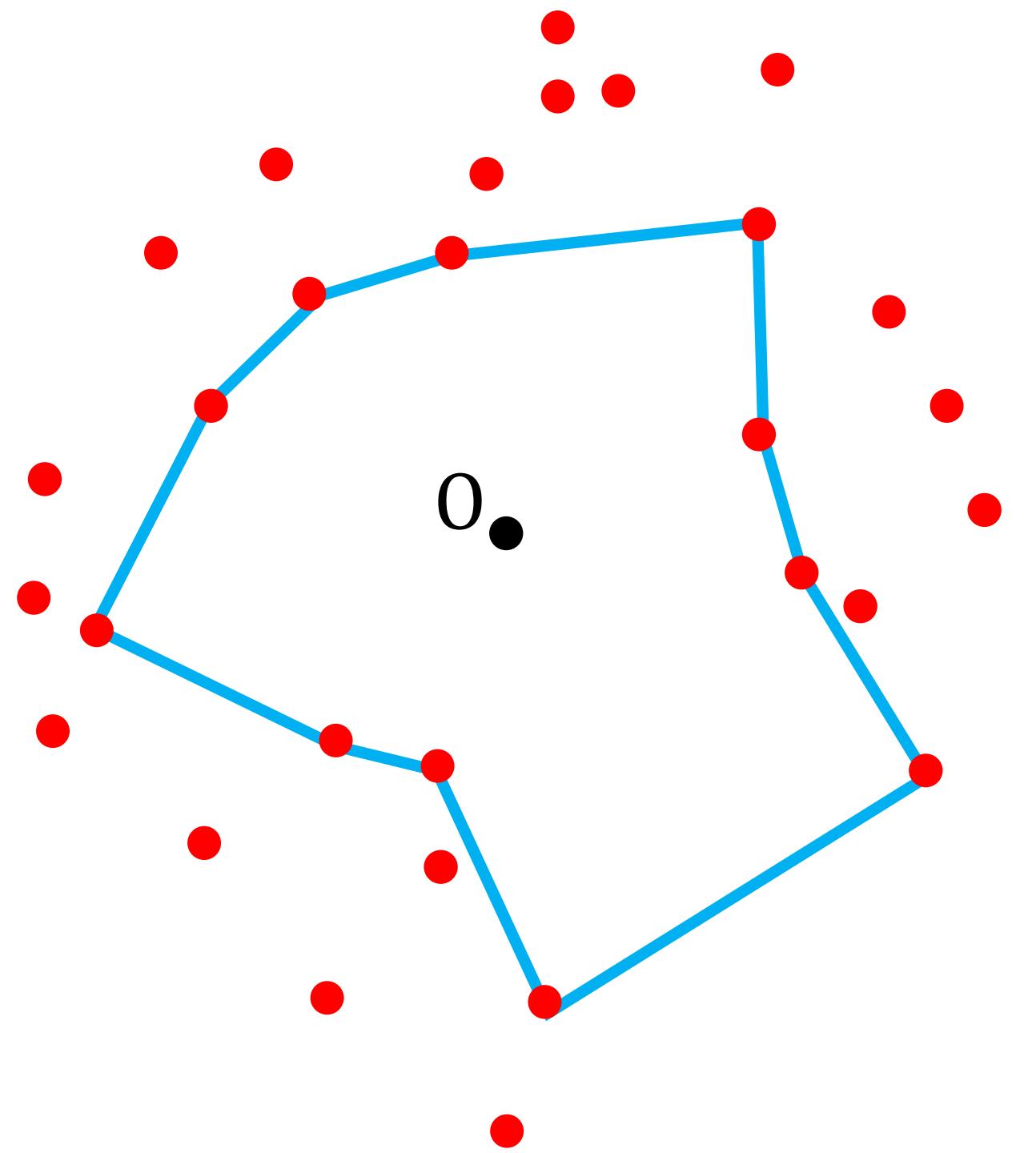}}%
    		\subfigure[\label{fig:pushhull2}]
    		{\includegraphics[width=0.4\columnwidth]{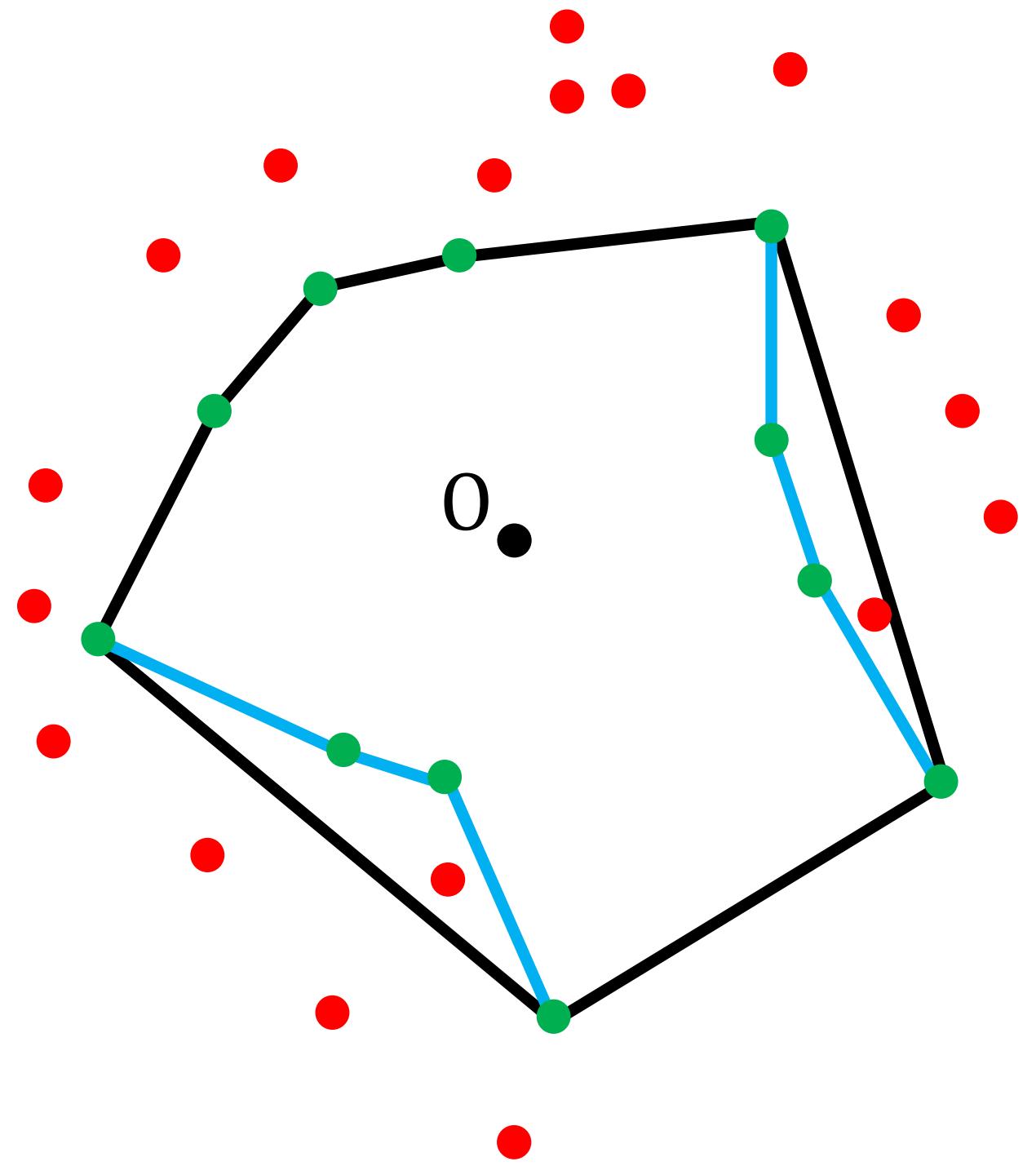}}
    		\subfigure[\label{fig:pushhull3}]
    		{\includegraphics[width=0.4\columnwidth]{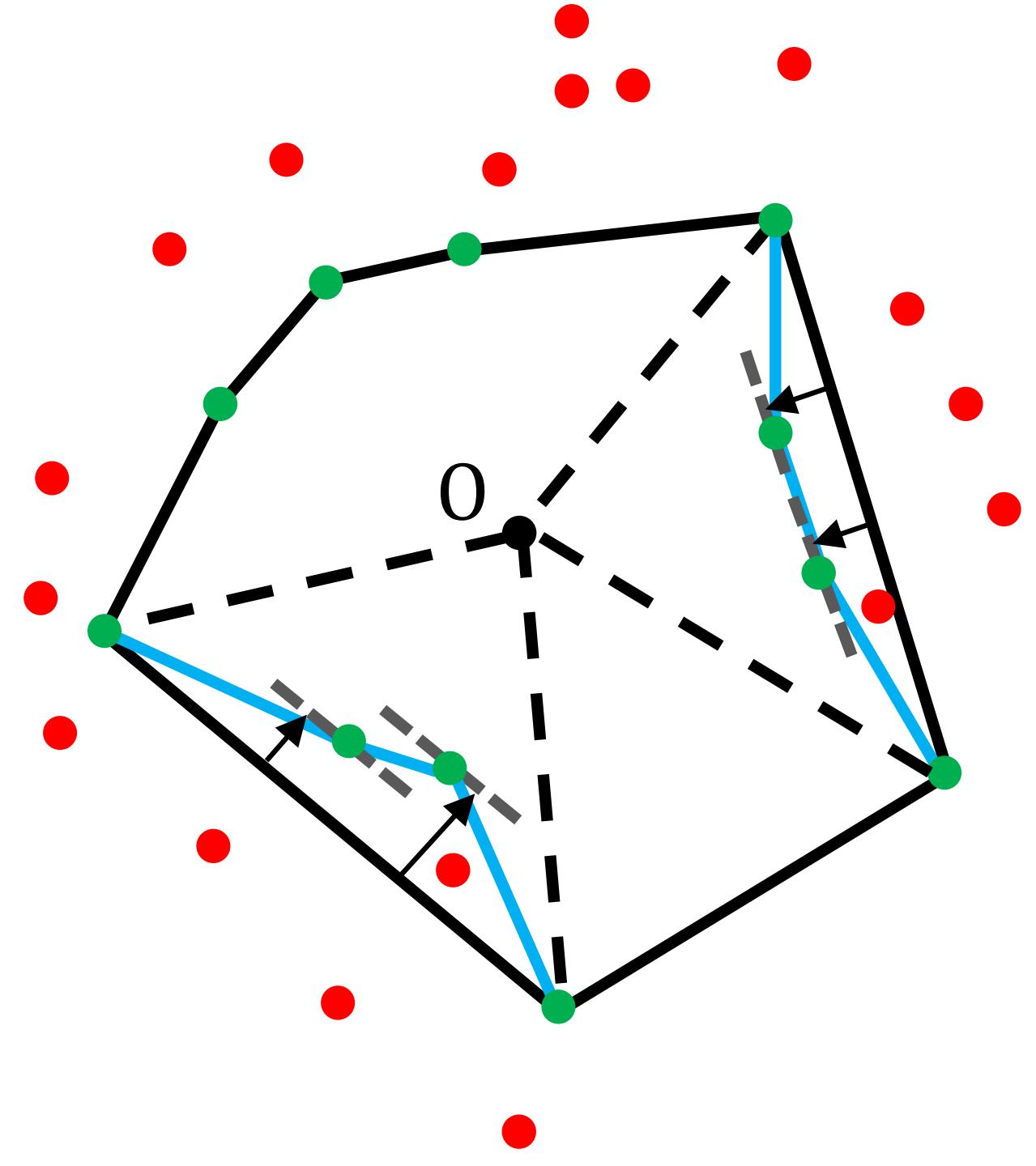}}%
    		\subfigure[\label{fig:pushhull4}]
    		{\includegraphics[width=0.4\columnwidth]{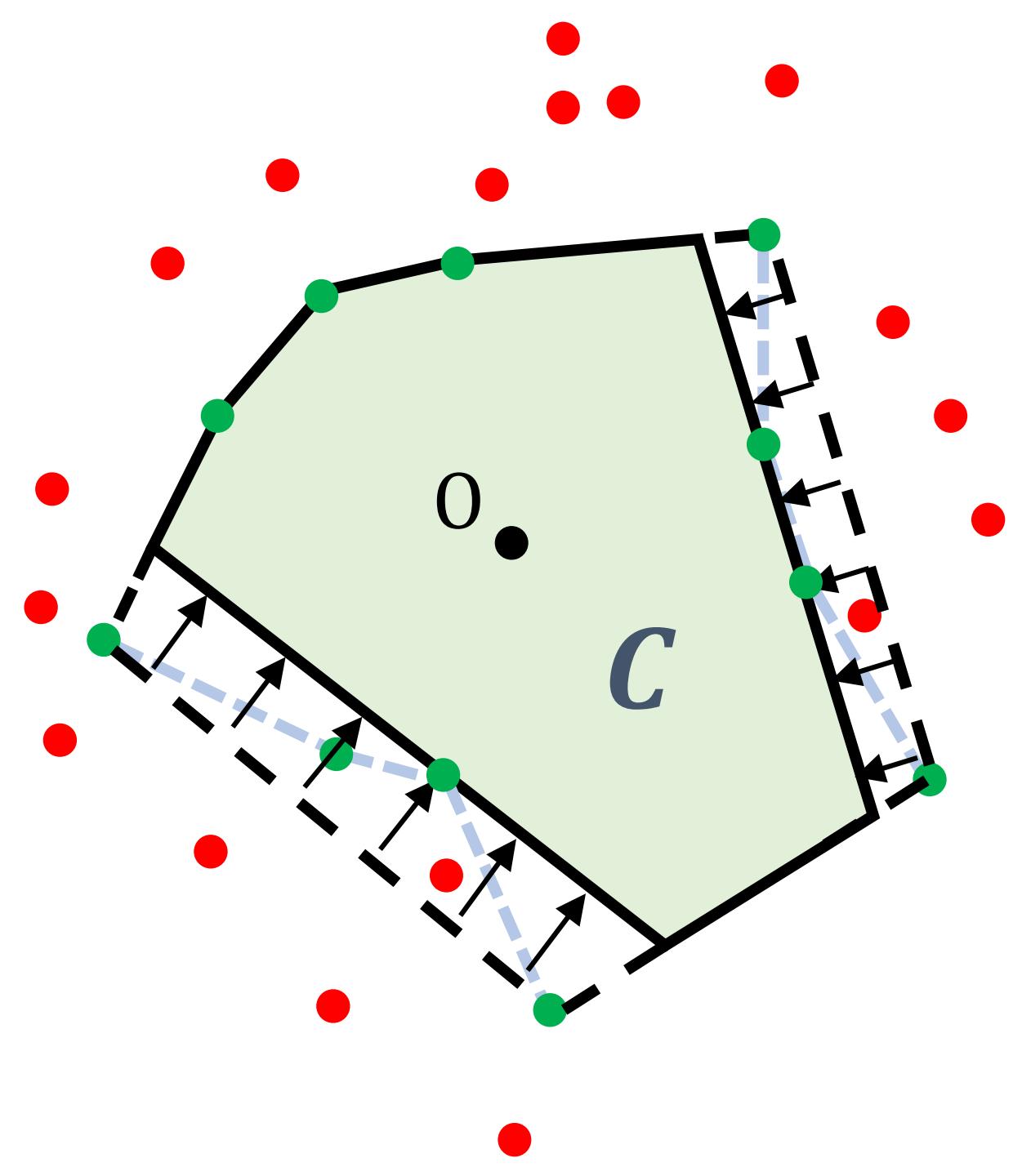}}
    	\end{center}
    	\caption{\label{fig:pushhull} (a) Point free star convex polytope. (b) Construct convex hull. (c) Search for the vertices contained in this convex hull. (d) 
    		Push the extreme edge as far forward as possible along its normal vector.}
    \end{figure}

	This method can not guarantee the convex region obtained is the largest. Still, because the number of edges is small, mostly dozens in three-dimensional space, thus the double description algorithm can get the result very fast. In actual experiments, we found that the time required of modification generally does not exceed 0.5 milliseconds.	
	
	Our algorithm is computationally efficient and easy to implement. The most costly process of our algorithm is to construct the convex hull of points after sphere flipping. In 2D or 3D scenes, the average complexity of convex hull algorithm is $nlog(n)$. Another advantage of our method is that the modified convex polytope must contain query coordinate. That is to say, in robotic motion planning applications, we can directly generate convex obstacle-free region from the position of range sensors without considering whether this region is reachable for robots.

	\section{Quantative Evaluations}
	\label{sec:results}
The proposed method is implemented in C++ 11, and will be released as an open-sourced ros-package~\footnote{Source code will be released at \url{https://github.com/ZJU-FAST-Lab/Galaxy} after the acceptance of this letter}.	 
To verify the effectiveness of our method, we test the algorithm using point clouds collected by Lidar in real environments and pointclouds randomly generated according to specific rules. 
The volume of the generated convex polytope and the running time are used as evaluation criteria to compare with the state of art method. 
IRIS~\cite{deits2015computing}, which also generates a convex point free region with a point cloud and a query coordinate as input.
	
	 IRIS is an iterative method, with each iteration divided into two steps:
\begin{enumerate}
\item Constantly search for the point closest to the ellipsoid, add linear constraints one by one to generate a convex polytope.
\item Use semi-definite programming to calculate the largest inscribed ellipsoid of the convex polytope. This ellipsoid is used for the next iteration.
\end{enumerate}

IRIS ensures that the volume of the ellipsoid converges to the local optimal solution, but it is difficult to calculate the free space in real-time due to the huge amount of calculation in the semi-definite programming.
In this letter, all tests are performed on a consumer-grade notebook computer with an i7-6700HQ CPU.
We compare the results calculated by the first iteration of IRIS (without calculating the ellipsoid), the complete results of IRIS, and the results of our proposed method, as detailed below.
The method proposed by Liu~\cite{liu2017ral} is similar to the first step of IRIS's first iteration, and it requires a prior leading path to initialize the direction of the long axis of the ellipsoid.
Therefore, we consider that there is no need to compare with this method.

\begin{figure}[t]
	\begin{center}
		\subfigure[\label{fig:Lidar_b}]
		{\includegraphics[width=0.9\columnwidth]{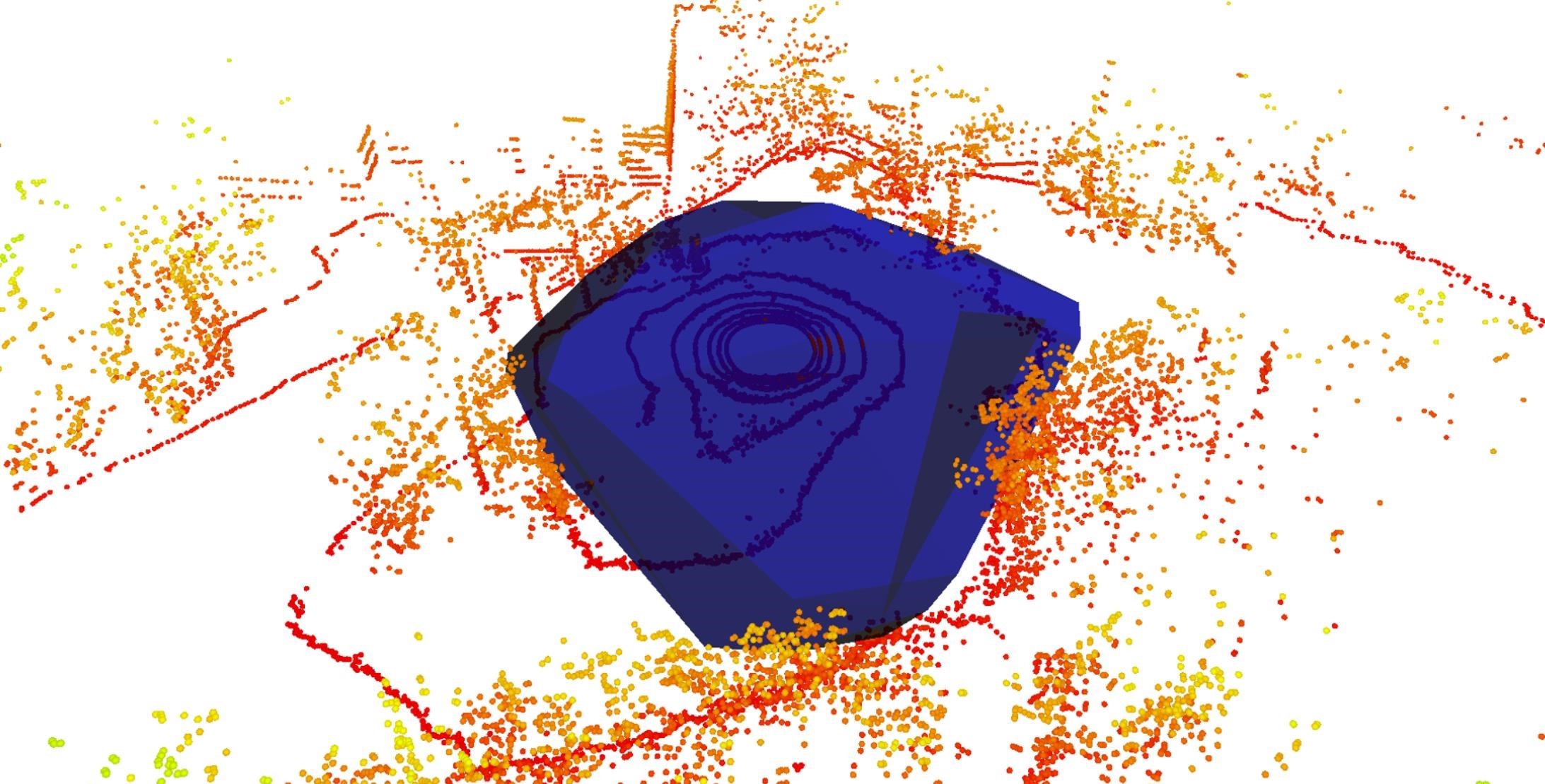}}
		\subfigure[\label{fig:Lidar_g}]
		{\includegraphics[width=0.9\columnwidth]{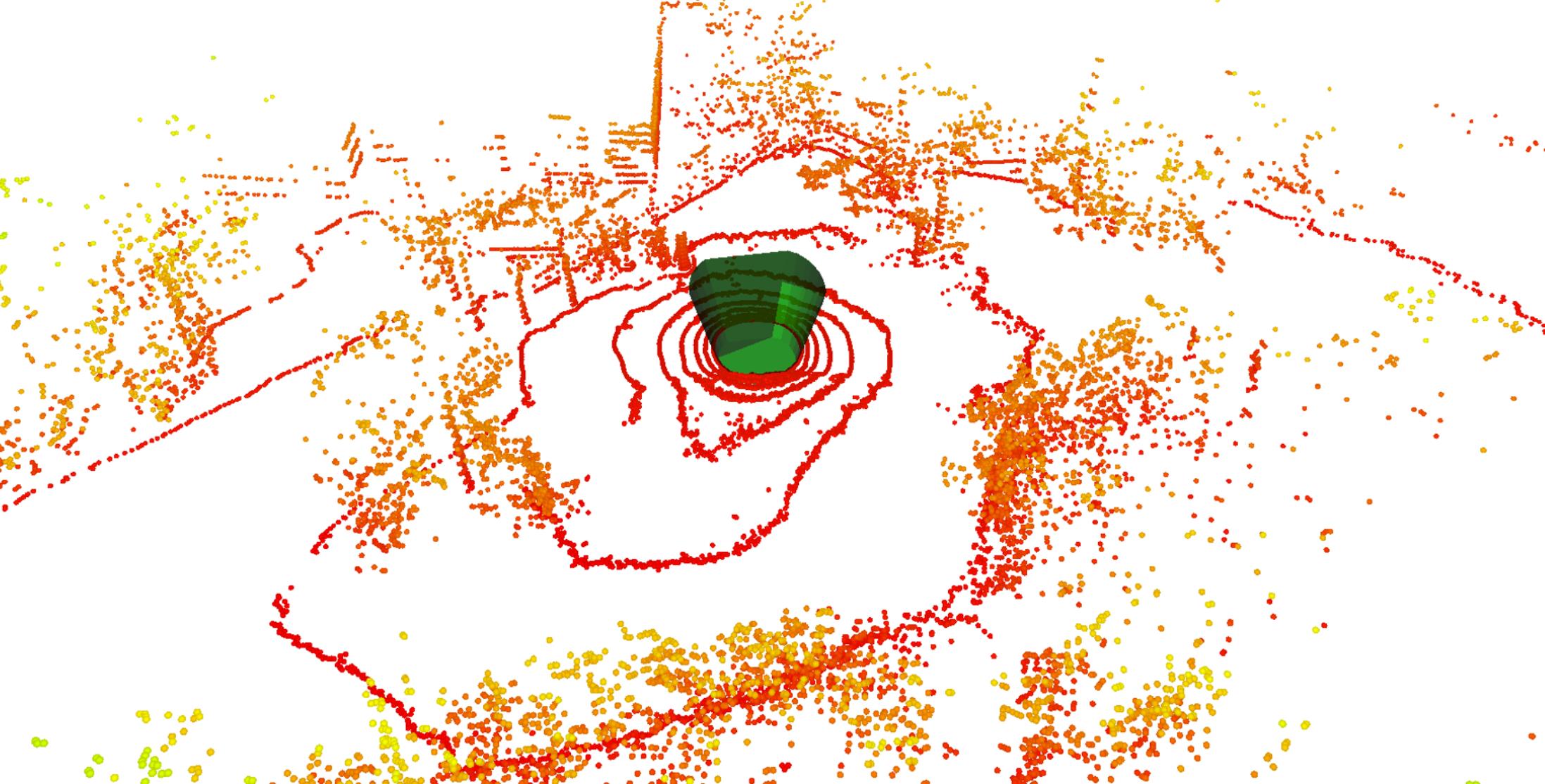}}
		\subfigure[\label{fig:Lidar_r}]
		{\includegraphics[width=0.9\columnwidth]{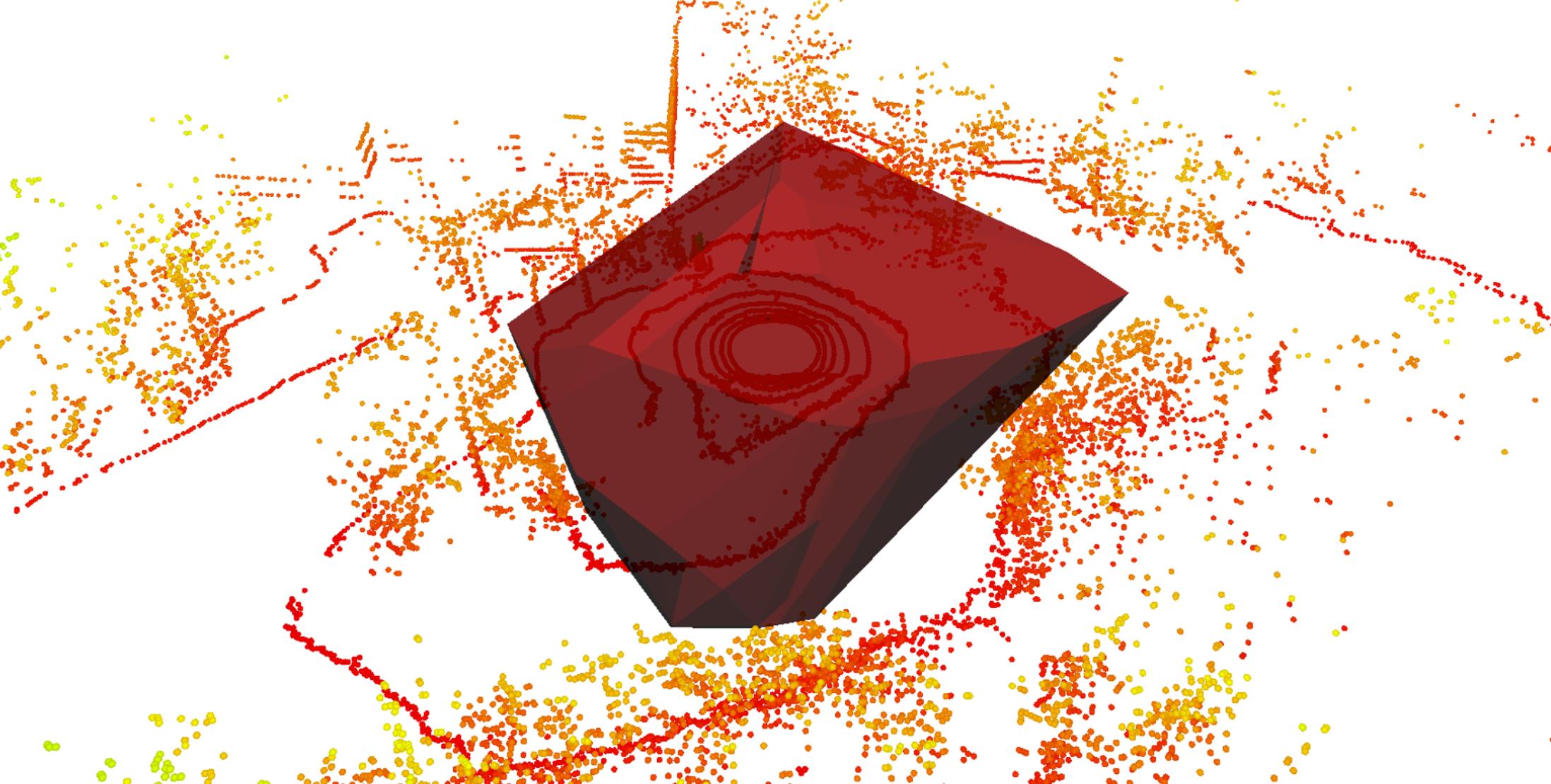}}
	\end{center}
	\caption{\label{fig:Lidar}(a) The final result of IRIS. (b) The result of Non-iterative version of IRIS. (c) The result of Ours.}
\end{figure}

\subsection{Test on real Lidar data set}
We use the point cloud data acquired by a 16-line Lidar with the SLAM framework LIO-SAM~\footnote{\url{https://github.com/TixiaoShan/LIO-SAM}}~\cite{liosam2020shan}.
Our purpose is to verify that the proposed algorithm can be easily applied to range sensors working in complex environments. Since Lidar gets a sparse frame of points, and there is usually not enough point in the $z$ direction that can be used as a boundary, IRIS cannot converge in most cases, so we artificially add a bounding box ($20\times20\times3$) to IRIS.
For a fair comparison, the same bounding box is also added to our algorithm.
Totally, 387 frames are tested with an average number of points in the bounding box as 22814.
Results are shown in Tab.~\ref{tab:test1} and Fig.~\ref{fig:Lidar}.
It shows that the volume of convex polytope generated by our algorithm is 22\% than  the final result of IRIS, but the time required is only 0.6\% of it.
Moreover, compared to the non-iterative version of IRIS, our method is much better than in both terms of final volume and computing speed.
	
	\begin{table}[h]
		\centering
		\caption{\label{tab:test1} test on lidar data set}
		\begin{tabular}{lcll}
			\toprule
			& non-iterative IRIS & IRIS & Ours\\
			\midrule
			Average volume & 37.3002 & \textbf{645.107} & 499.022\\
			Average time(ms) & 145.662 & 1279.66 & \textbf{7.22958} \\
			
			\bottomrule
		\end{tabular}
	\end{table}

    \begin{figure}[t]
	\begin{center}
		\subfigure[\label{fig:ball}]
		{\includegraphics[width=1.0\columnwidth]{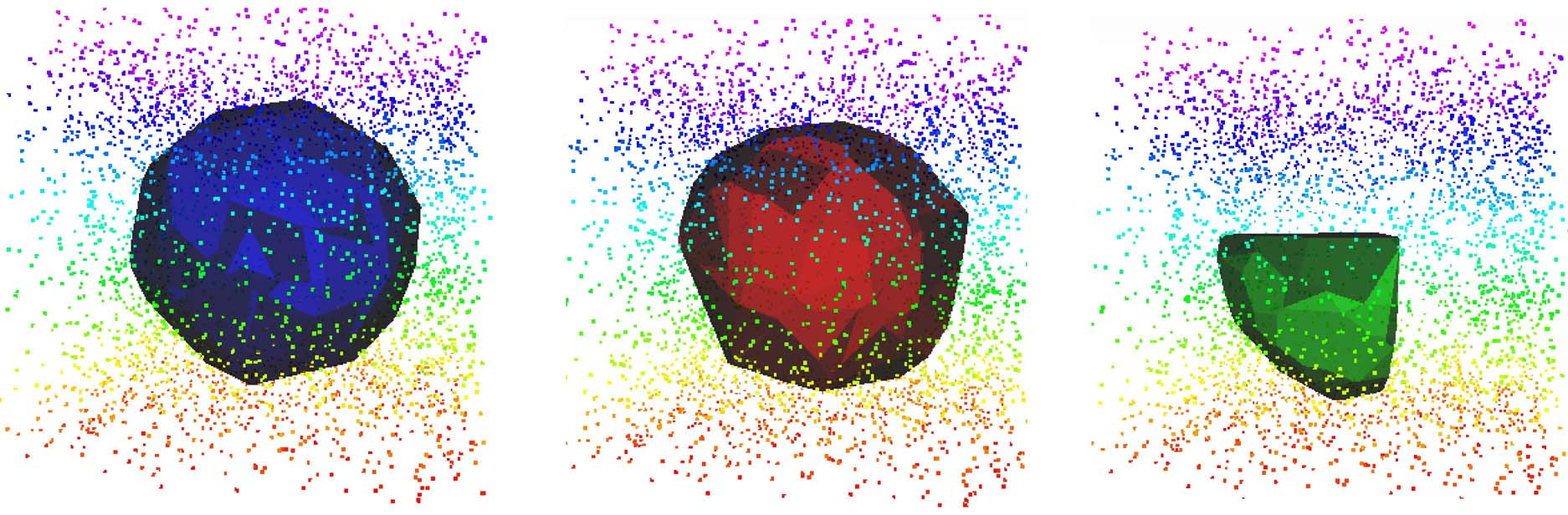}}
		\subfigure[\label{fig:rect}]
		{\includegraphics[width=1.0\columnwidth]{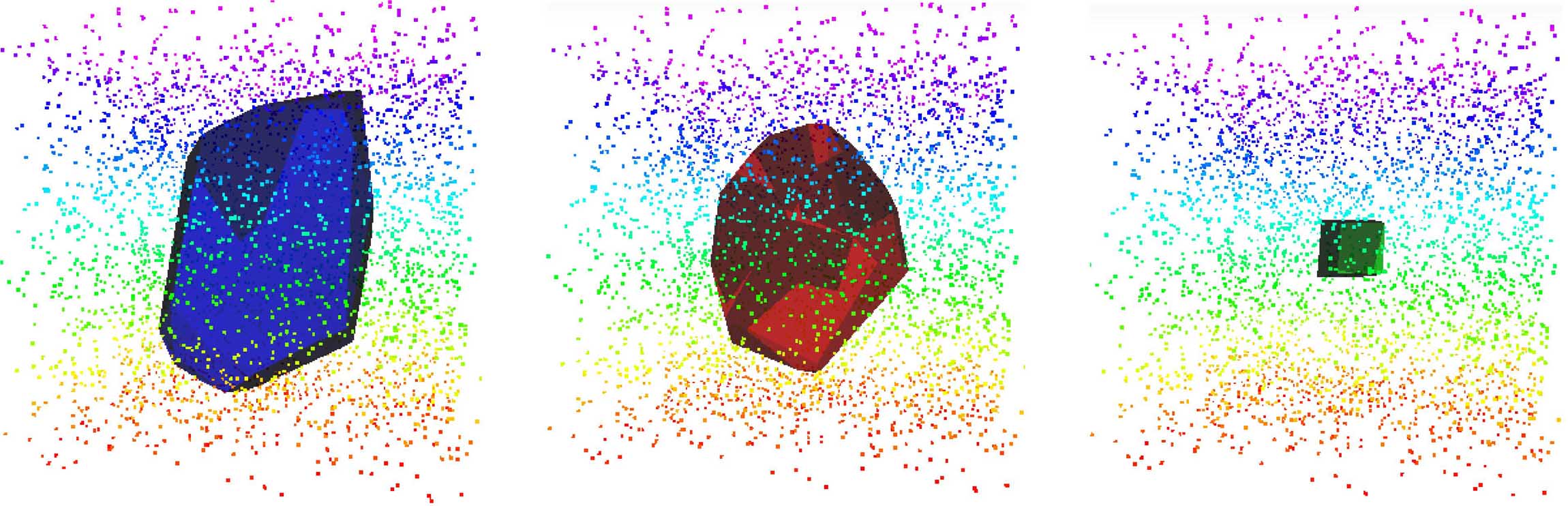}}
		\subfigure[\label{fig:cross}]
		{\includegraphics[width=0.96\columnwidth]{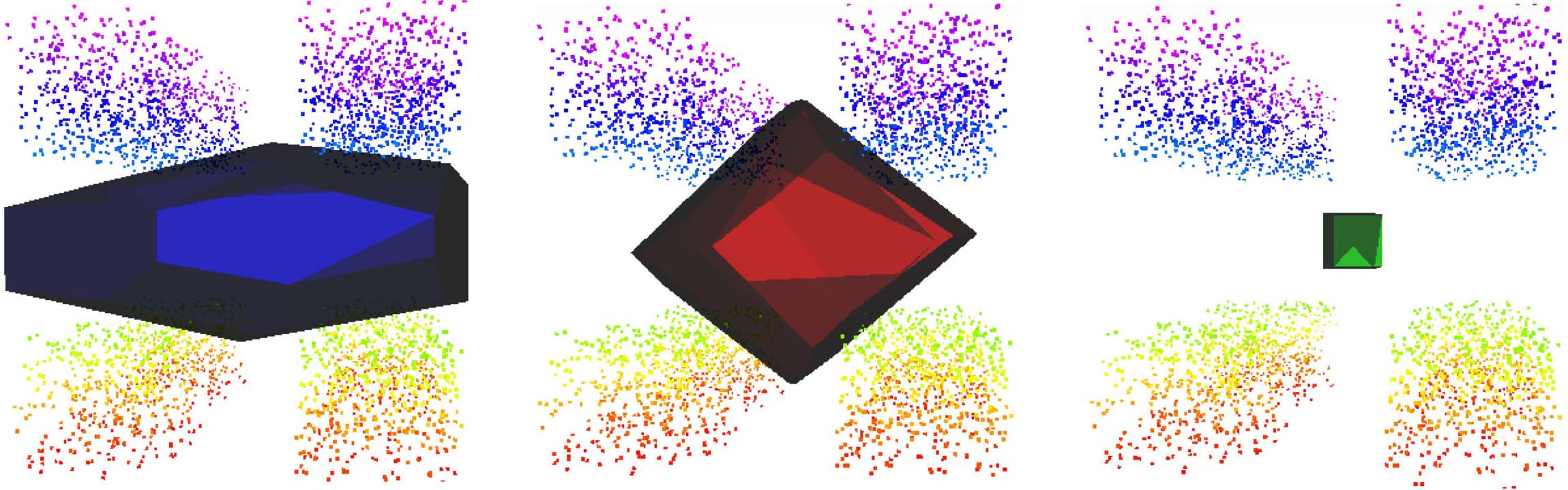}}
	\end{center}
	\caption{\label{fig:random} (a) Ground truth is a sphere (b) Ground truth is a cuboid (c) Ground truth is a cross shape composed by two cuboids. (The blue polytopes are generated by IRIS, the red ones are generated by our method and the green ones are results of non-iterative version of IRIS.)}
    \end{figure}
	
	\subsection{Test on randomly generated data}		
	In order to better reveal the characteristics of our algorithm, we randomly generate points in a cubic space with a point-free area located at its center.
In this test, the number of points is fixed as 3600, and the shape of the free area is sphere, cuboid, and crossed cuboids.

In the three testing scenarios, the results of proposed algorithm and IRIS are shown in Tabs.~\ref{tab:test2} and Figs.~\ref{fig:random}, respectively.
Overall speaking, our algorithm shows superior performance in all these cases compared with the other.

	\begin{table}[h]
	\centering
	\caption{\label{tab:test2} test on random data}
	\begin{tabular}{lllll}
		\toprule
		&  & non-iterative IRIS & IRIS    & Ours   \\
		\midrule
		Sphere                & Average volume & 903.088            & \textbf{1601.98} & 1512.11 \\
		& Average time(ms) & 42.184             & 825.732 & \textbf{7.82334} \\ 
		\midrule
		\multirow{2}{*}{cuboid} &   Average volume & 81.0409            & \textbf{532.232} & 350.213 \\
		& Average time(ms) & 10.1022           & 426.489 & \textbf{2.84751} \\ 
		\midrule
		\multirow{2}{*}{cross} & Average volume & 23.4778            & \textbf{3018.49} & 1720.2  \\
		& Average time(ms) & 11.0267            & 304.786 & \textbf{1.99532} \\ 
		\bottomrule
	\end{tabular}
\end{table}

\begin{figure*}[t]    
	\centering
	{\includegraphics[width=1.9\columnwidth]{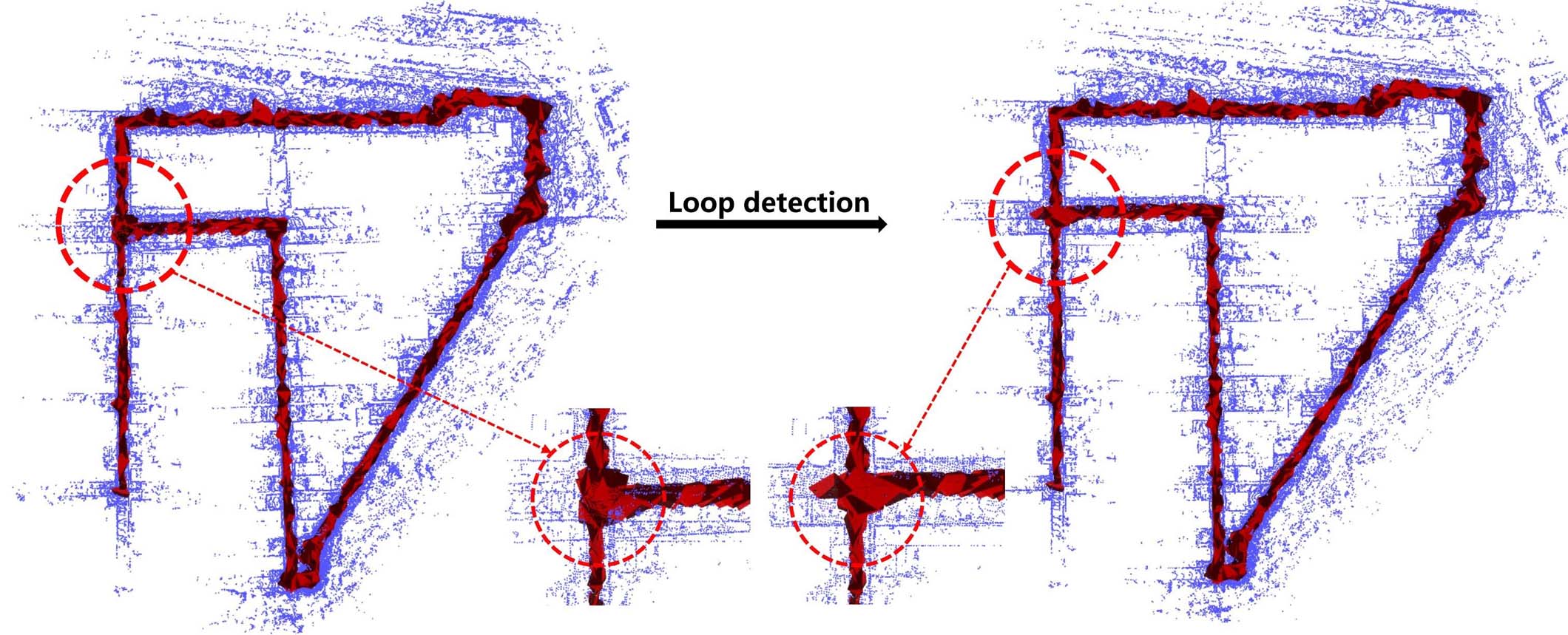}}
	\caption{\label{fig:topo} The visualization of the deformable topological map and the loop correction.The left figure shows the topological map is drifting along with the odometry. The right figure demonstrates that after a loop closure, the topological map is corrected at the same time with the poses and pointclouds. Blue dots are pointclouds mapped by the Lidar SLAM system, red polytopes are convex free polytopes generated by our method. }
\end{figure*}	

\begin{figure}[t]    
	\centering
	{\includegraphics[width=0.99\columnwidth]{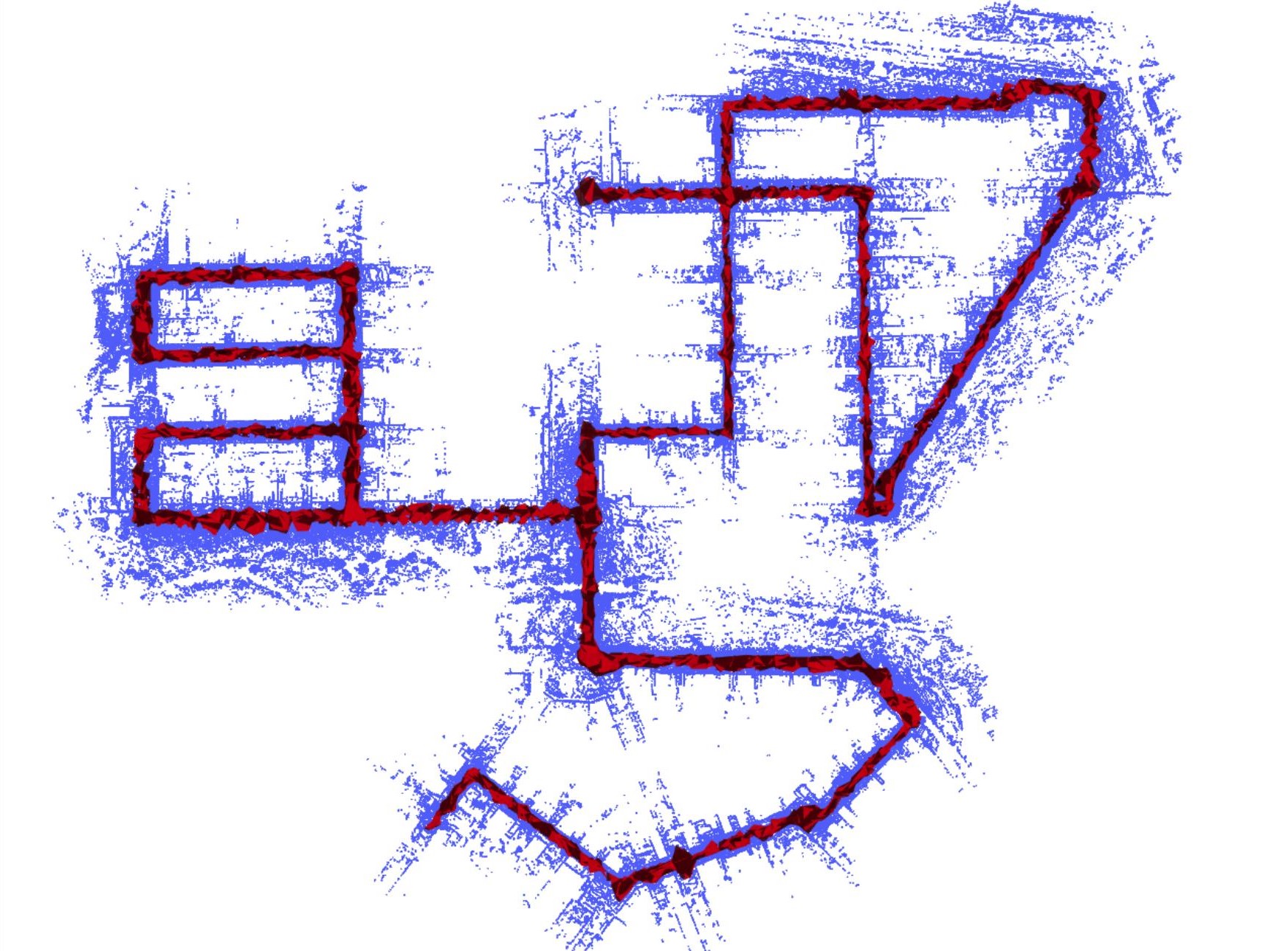}}
	\caption{\label{fig:loop}The overall view of the topological map built by our method on kitti dataset. Markers are interpreted as the same as in Fig.~\ref{fig:topo}. }
\end{figure}

Our method can generate a fairly large convex polytope within a few milliseconds, which fully meets the real-time requirements of robotic mapping and planning.
According to the statistics, in sphere case, the running time of our proposed algorithm is slightly slower than the other two cases.
The reason is that if there are many points close to the query coordinate, these points will be approximately distributed on a spherical surface after the sphere flipping.
This degradation will greatly increase the time required for the convex hull algorithm, but still much faster than IRIS.
Moreover, we note that the volume obtained in the cross case is relatively limited.
That is because, in this situation, much volume is lost in the process of modifying the star convex polytope to convex.

	\section{Applications}
	\label{sec:applications}
	\subsection{Deformable Large-scale Sparse Topological Mapping}	
	\subsubsection{Motivation and Methodology}	
		Mapping large scale environments for repeatable global navigation~\cite{gao2020teach} is of emergent necessity in the robotics and vision community.
With traditional dense mapping technology, as the covered area grows, the map size quickly becomes unacceptably huge, thus preventing its applications in embedded platforms.
Therefore, designing a mapping system that focuses on capturing only free space with sparse data is highly desirable.
Our proposed method naturally suits for this application, as it represents free space centroid among cluttered obstacles with a few parameters of the polytope and relies only on local pointclouds.
Moreover, the polytopes generated by our method attach vertices to obstacle points. Therefore they can be easily globally deformable according to the loop closure of a SLAM system.
Our system features the following advantages:
\begin{itemize}
\item Our method is lightweight yet effective and requires only point clouds as the input.
Therefore, it is applicable to mobile robots for online applications.
\item Since we attach each polytope to a keyframe of the SLAM system when loop closure occurs, polytopes deform along with the pose graph to ensure the global consistency of the topological map.
\item The proposed method exploits the sparsity of environments by representing large free space with dozens of vertices, thus is scaling favorably and is possible to build a map covering a huge area.
\end{itemize}

\subsubsection{Implementation Details}
Here, we also test our method using a 16-line lidar with the LIO-SAM system ~\cite{liosam2020shan}.
Note the proposed method can also be applied to sensors with a limited field of view.
Our method is tested on the Kitti dataset~\cite{Geiger2013Vision}, as demonstrated in Fig.~\ref{fig:topo}.
For the point cloud data obtained in each keyframe, a polytope grouping free space is generated using the proposed method.
Since the vertices of the polytope are also points of this frame, the convex polytope is naturally attached to this keyframe.
As the sensor moves, we generate a new polytope once it leaves the polytope built before, which ensures that convex polytopes consecutively overlaps with minimal redundancy.
In this letter, we highlight the capability of automatically wrapping of our method.
After a loop closure occurs, all affected polytopes deform immediately to keep the overall map consistent, which is shown in Fig.~\ref{fig:loop}.
More details can be viewed in the video\footnote{\url{https://youtu.be/dl0VBgGLLA4}}.

\subsection{Quadrotor Safe Trajectory Generation}

\subsubsection{Motivation and Methodology}
Among enormous methods which support online planning, the hard constrained way which finds free navigable space and solves for constrained trajectories has shown its great capability and generality.
These methods usually build a convex solution space by carving a series of convex and collision-free closed geometrical volumes in a confined environment, then generate flight trajectories within it. 
However, finding large and guaranteed convex free space is nontrivial and often dominates the computational overhead in the overall planning pipeline.
The dominant advantage of our method is the efficiency to generate overlapping convex polytopes (flight corridor) with more free space for planning so that it can be widely used for the corridor based optimization of path planning.

\subsubsection{Implementation Details}
Depending on the characters of reference paths, different strategies are applied to generate connected polytopes.
Rather than using linear paths searched by JPS like~\cite{liu2017ral, tordesillas2019faster}, we firstly search a path by kinodynamic A*~\cite{boyu2019ral} and generate polytopes based on discrete points. 
To utilize the time allocation of the front-end trajectory, we trigger the polytope generation when the next position in the reference path is outside of the exiting corridor or the interval is reached to the time threshold. 
Such particular overlapping polytopes we generate alleviates the difficulties of the optimization process hence gives the trajectory adjustable potential to fix a series of polynomial segments to the polytopes.

To further optimize the trajectory, we apply sum-of-squares based planning~\cite{DeiTed1505} that encodes safety constraints to sum-of-squares conditions, therefore, restricting each trajectory segment in the corresponding polytope.
With ordered overlapping polytopes of approximately equal time intervals, the safe region assignment could be heuristically fixed. In practice, we can directly fix each polynomial piece of the trajectory to each polytope to eliminate integer variables which cause most of the computation cost. 

\subsubsection{Analysis and Comparisons}

We present a comparison between our corridor generation method with Liu's method~\cite{liu2017ral} for global path planning. 
In the simulation environment~\cite{boyu2019ral}, we use a $40 \times 20 \times5m^3$ map which randomly sets 200 circles and 200 obstacles. 
To decrease the computational time, Liu's method employs a bounding box and uses the range of local point clouds inside it. To compare with our method at the same level, we take the same scale of the point clouds along with each seed point into our algorithm. 
We firstly search a leading path using kinodynamic A* for both methods to generate corridor then use the corridor as the input of the optimization constraints.
Since Liu's method relies on a line segment to inflate a polytope, we set a small time interval on kinodynamic A* path to get straight lines and then use the same strategy to generate the next polytope.
In Fig.~\ref{fig:planning}, we show trajectories and corridors generated by ours and Liu's method.

\begin{figure}[t]
	\begin{center}
		\subfigure[\label{fig:sikang_path} Liu's method.]
		{\includegraphics[width=1\columnwidth]{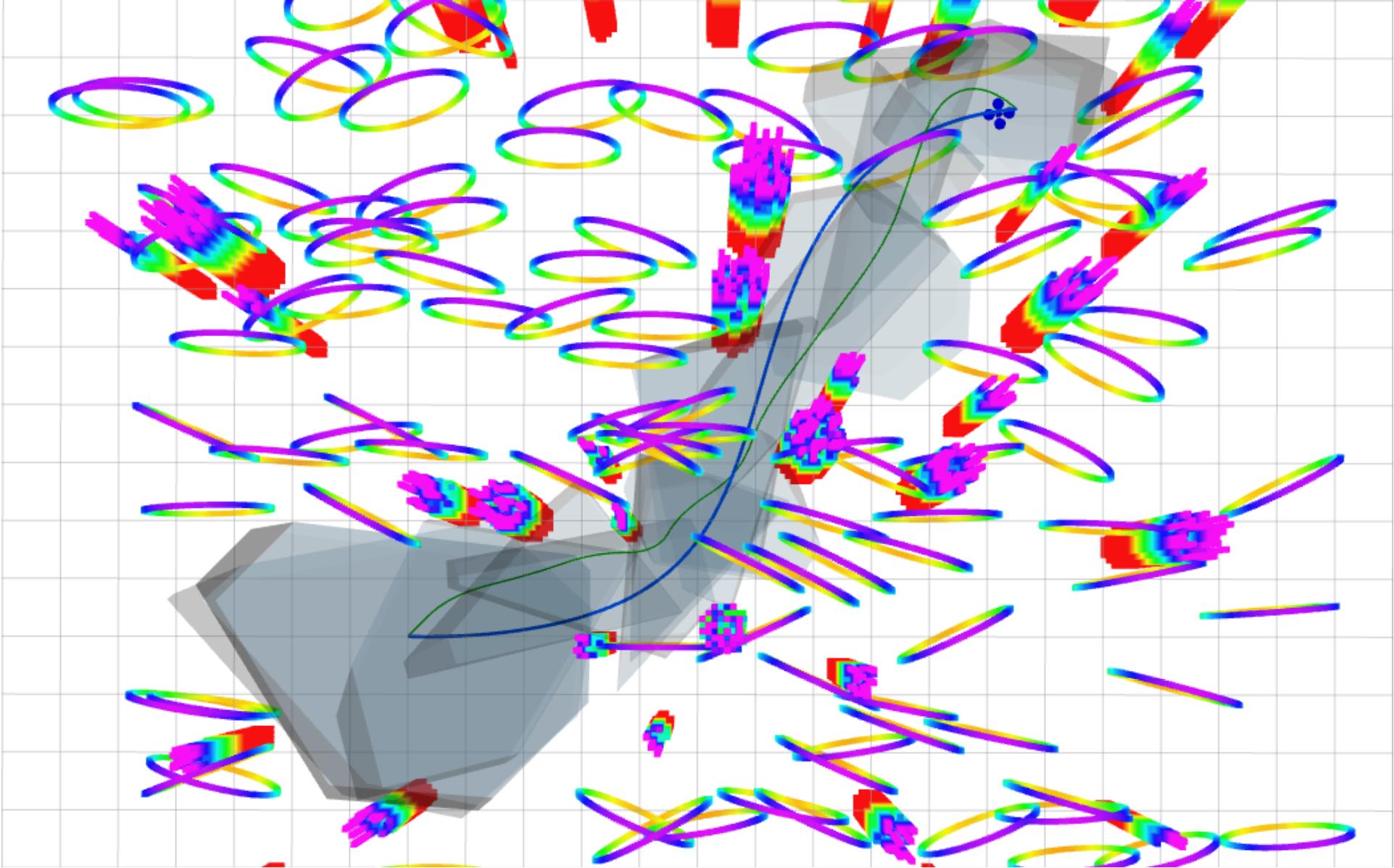}}
		\subfigure[\label{fig:star_path} our method.]
		{\includegraphics[width=1\columnwidth]{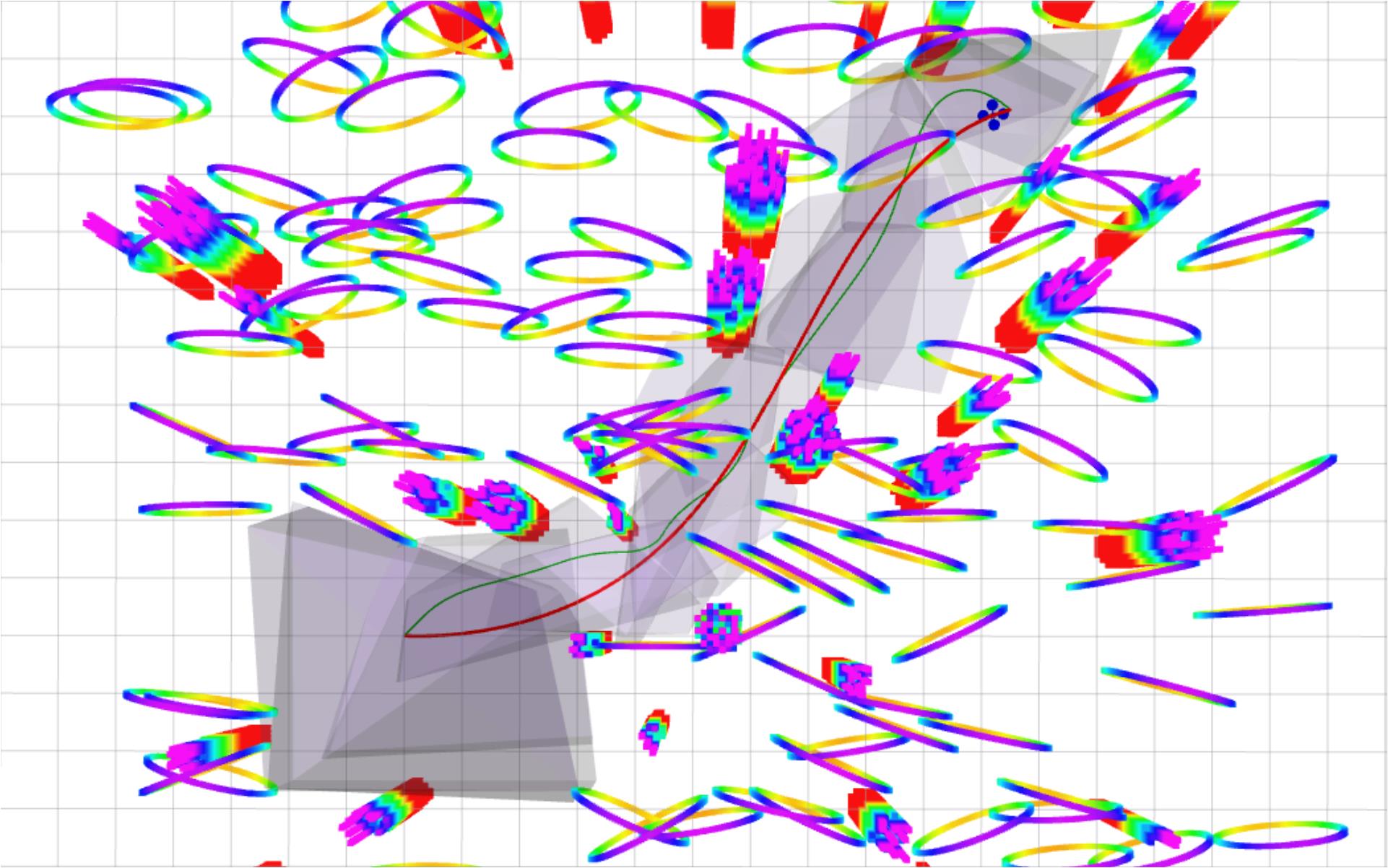}}
	\end{center}
	\caption{\label{fig:planning}Trajectory generation with safe flight corridor. The leading path is a kinodynamic a* path (green) (a) Liu's method and (b) our method to generate overlapping polytopes for trajectory optimization. The blue path is the result using Liu's method while the read one is the optimized trajectory using our method.}
\vspace{-0.5cm}
\end{figure}

\begin{figure}[t]
	\begin{center}
		\subfigure[\label{fig:case1}case 1.]
		{\includegraphics[width=0.45\columnwidth]{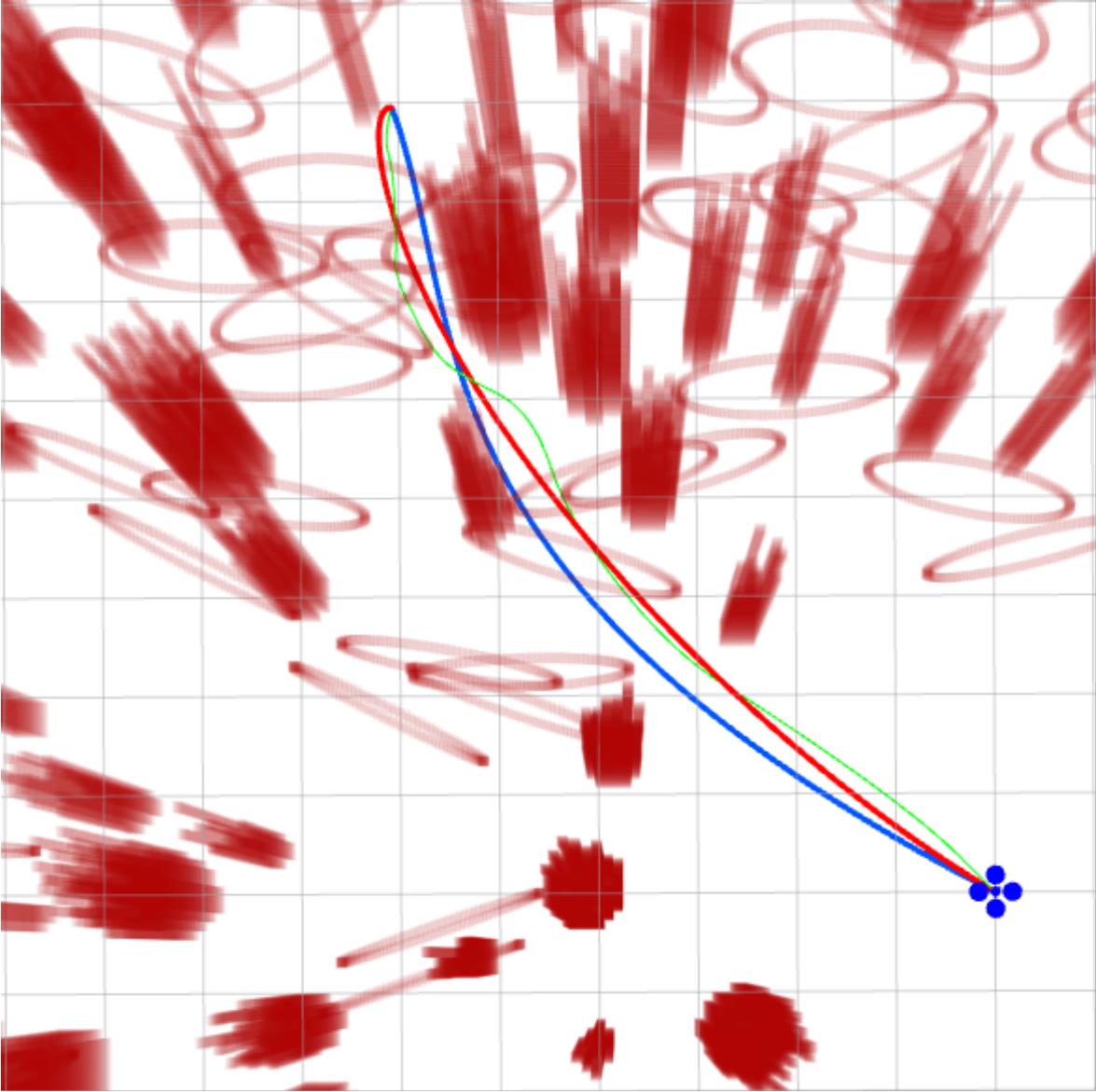}}
		\subfigure[\label{fig:case2} case 2.]
		{\includegraphics[width=0.45\columnwidth]{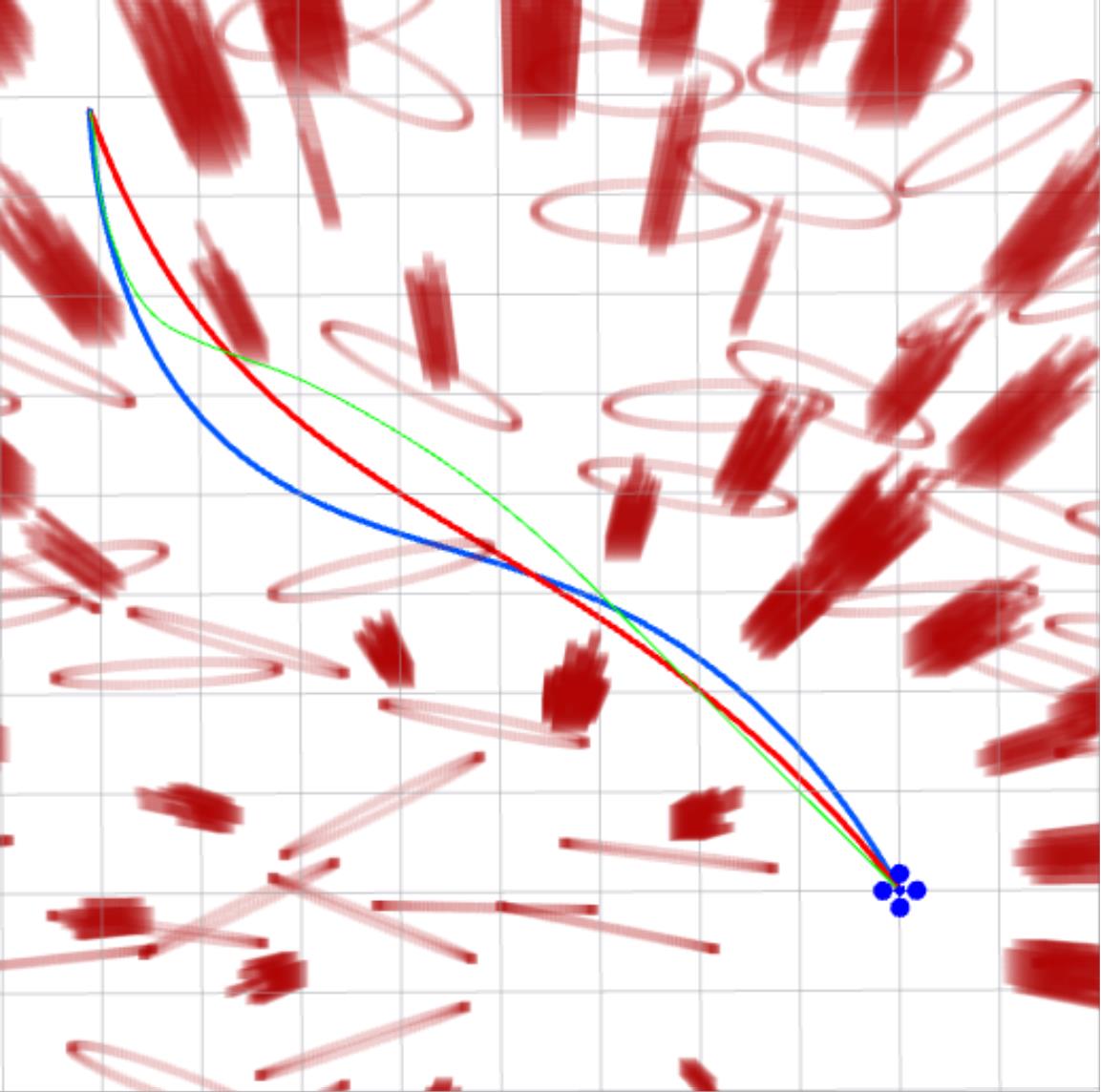}}
	\end{center}
	\caption{\label{fig:traj_comparison} Comparison of trajectory generated by both methods. }
\end{figure}

To compare the quality of the corridors for generating trajectories, we randomly generate 10 simulation maps, and manually set 10 targets in series each time for corridor generation and trajectory optimization.
The results are shown in Tab.~\ref{tab:compare}, which evaluate both methods in the following aspects: the number of polytopes, the number of hyperplanes, and the total computation time to obtain the corridor. 
As validated in the statics, our method significantly saves computational overhead for building a corridor, therefore is suitable for long-term online planning.
Besides, due to precesion of the convex hull finding is tunable, our method can adjust the fedility of the genearted polytopes in complex environments. 
Therefore, in practice, our method can find free polytopes with much fewer hyperplanes.
In general, fewer number of hyperplanes dramatically decreases the number of cone constraints in the sum-of-squares conditions, making the trajectory optimization much more lightweight.

\begin{table}[htbp]
	\centering
	\caption{\label{tab:compare} Comparison of corridors for trajectory optimizations}
	\begin{tabular}{lcll}
		\toprule
		Method & Ours  &  Liu's \\
		\midrule
		number of polytopes  & \textbf{15.30} & 16.69\\
		number of hyperplanes &\textbf{262.74} & 469.96 \\
		time to generate corridor(ms) & \textbf{39.344} & 77.215	\\
		\bottomrule
	\end{tabular}
\end{table}

Each polytope fixes a piece of a polynomial with the same time duration, hence a corridor with fewer polytopes results in less lap time for the whole trajectory and better suits the time allocation found by the leading path. 
Therefore, Fig.~\ref{fig:traj_comparison} shows that our method tends to generate pleasing trajectories with higher quality compared to the other method.
In conclusion, our system has the adaptive capacity to decrease the number of hyperplanes of the corridor while guaranteeing the quality to satisfy online real-time planning.

	\section{Conclusion}
	\label{sec:conclusion}
In this paper, we introduce a novel method for finding large convex polytopes directly on pointclouds.
Our method utilizes sphere flipping to transform original points to a nonlinear space, where the convex hull can be quickly found.
Then, we map the convex hull back to the original Cartesian space and obtain a star convex polytope.
Finally, the star convex polytope is modified to be convex by squeezing local nonconvex vertices.
Compared to previous state-of-the-art works, our method highlights its superior efficiency and scalability, and attain comparable solution quality.
To further strengthen the capability and versatility of our method, we also demonstrate relevant applications in deformable large-scale sparse topological mapping and quadrotor trajectory generation.
In the future, we plan to study the optimal local free space grouping based on clustering.
In this way, the proposed method can be guided by the local spreading of obstacles, thus more easily finding larger polytopes.

\bibliography{references}

\end{document}